\documentclass[11pt]{article}
\usepackage[letterpaper]{geometry}
\usepackage{acl2012}
\usepackage{amsmath,amsfonts}
\usepackage{amsthm}
\usepackage{times}
\usepackage{xcolor}
\usepackage{url}
\usepackage{multirow}
\usepackage{hhline}
\usepackage{tikz}

\makeatletter
\newcommand{\@BIBLABEL}{\@emptybiblabel}
\newcommand{\@emptybiblabel}[1]{}
\makeatother
\usepackage[breaklinks,hidelinks]{hyperref}
\usepackage{breakurl}
\usepackage{bookmark}

\usetikzlibrary{matrix}
\usetikzlibrary{decorations.pathreplacing}

\usepackage{everypage}

\usetikzlibrary{calc,tikzmark}

\usepackage{mathtools}
\usepackage{rotating}

\usepackage{color}

\newtheorem{thm}{Theorem}
\newtheorem{lemma}[thm]{Lemma}

\newtheorem{prop}[thm]{Proposition}

\setlength\titlebox{5cm}    

\newcommand{\shashicomment}[1]{\textcolor{red}{#1 -- Shashi}}
\newcommand{\vocabsize}{|H|}
\newcommand{\vocab}{H}
\newcommand{\ignore}[1]{}

\newenvironment{itemizesquish}[2]{\begin{list}{\labelitemi}{\setlength{\itemsep}{#1}\setlength{\labelwidth}{#2}\setlength{\leftmargin}{\labelwidth}\addtolength{\leftmargin}{\labelsep}}}{\end{list}}

\title{Encoding Prior Knowledge with Eigenword Embeddings}

\author{
Dominique Osborne \\
Department of Mathematics and Statistics \\
University of Strathclyde \\
Glasgow, G1 1XH, UK \\
{\small \texttt{dominique.osborne.13@uni.strath.ac.uk}}
\And
Shashi Narayan and Shay B. Cohen \\ School of Informatics \\ University of Edinburgh \\ Edinburgh, EH8 9LE, UK \\ \small{\texttt{\{snaraya2,scohen\}@inf.ed.ac.uk}}
}

\mathtoolsset{showonlyrefs,showmanualtags}

\begin{document}

\maketitle

\begin{abstract}
  Canonical correlation analysis (CCA) is a method for reducing the
  dimension of data represented using two views. It has been
  previously used to derive word embeddings, where one view indicates
  a word, and the other view indicates its context. We describe a way
  to incorporate prior knowledge into CCA, give a theoretical
  justification for it, and test it by deriving word embeddings and
  evaluating them on a myriad of datasets.
\end{abstract}

\section{Introduction}

In recent years there has been an immense interest in representing
words as low-dimensional continuous real-vectors, namely word
embeddings. Word embeddings aim to capture lexico-semantic information
such that regularities in the vocabulary are topologically represented
in a Euclidean space. Such word embeddings have achieved
state-of-the-art performance on many natural language processing (NLP)
tasks, e.g., syntactic parsing \cite{socher-13-cvg}, word or phrase
similarity \cite{mikolov-13a
}, dependency parsing
\cite{bansal-14}, unsupervised learning \cite{parikh-14} and others.
Since the discovery that word embeddings are useful as features for
various NLP tasks, research on word embeddings has taken on a life of
its own, with a vibrant community searching for better word
representations in a variety of problems and datasets.

These word embeddings are often induced from large raw text capturing
distributional co-occurrence information via neural networks
\cite{bengio-03,mikolov-13a,mikolov-13b} or spectral methods
\cite{deerwester-90-lsa,dhillon-15}. While these general purpose word
embeddings have achieved significant improvement in various tasks in
NLP, it has been discovered that further tuning of these continuous
word representations for specific tasks improves their performance by
a larger margin. For example, in dependency parsing, word embeddings
could be tailored to capture similarity in terms of context within
syntactic parses \cite{bansal-14} or they could be refined using
semantic lexicons such as WordNet \cite{miller1995wordnet}, FrameNet
\cite{baker-98-framenet} and the Paraphrase Database
\cite{ganitkevitch-13-ppdb} to improve various similarity tasks
\cite{yu-dredze-2014,faruqui:2015:Retro,rothe-15-acl}.  This paper
proposes a method to encode prior semantic knowledge in spectral word
embeddings \cite{dhillon-15}.

Spectral learning algorithms are of great interest for their speed,
scalability, theoretical guarantees and performance in various NLP
applications. These algorithms are no strangers to word embeddings
either. In latent semantic analysis (LSA,
\cite{deerwester-90-lsa,landauer-98}), word embeddings are learned by
performing SVD on the word by document matrix. Recently,
\newcite{dhillon-15} have proposed to use canonical correlation
analysis (CCA) as a method to learn low-dimensional real vectors,
called Eigenwords. Unlike LSA based methods, CCA based methods are
scale invariant and can capture multiview information such as the left
and right contexts of the words. As a result, the eigenword embeddings
of \newcite{dhillon-15} that were learned using the simple linear
methods give accuracies comparable to or better than state of the art
when compared with highly non-linear deep learning based approaches
\cite{collobert-08,mnih-07,mikolov-13a,mikolov-13b}.


The main contribution of this paper is a technique to incorporate
prior knowledge into the derivation of canonical correlation
analysis. In contrast to previous work where prior knowledge is
introduced in the off-the-shelf embeddings as a post-processing step
\cite{faruqui:2015:Retro,rothe-15-acl}, our approach introduces
prior knowledge in the CCA derivation itself. In this way it preserves
the theoretical properties of spectral learning algorithms for
learning word embeddings. The prior knowledge is based on lexical
resources such as WordNet, FrameNet and the Paraphrase Database.

Our derivation of CCA to incorporate prior knowledge is not limited to
eigenwords and can be used with CCA for other problems. It follows a
similar idea to the one proposed by \newcite{koren2003visualization}
for improving the visualization of principal vectors with principal
component analysis (PCA). Our derivation represents the solution to
CCA as that of an optimization problem which maximizes the distance
between the two view projections of training examples, while weighting
these distances using the external source of prior knowledge.
As such, our approach applies to other uses of CCA in the NLP literature,
such as the one of \newcite{jagarlamudi2012regularized}, who used CCA
for transliteration, or the one of \newcite{silberer2013models}, who used
CCA for semantically representing visual attributes.



\section{Background and Notation}
\label{section:background}

For an integer $n$, we denote by $[n]$ the set of integers $\{
1,\ldots,n\}$.  We assume the existence of a vocabulary of words,
usually taken from a corpus. This set of words is denoted by $H = \{
h_1, \ldots, h_{\vocabsize} \}$. For a square matrix $A$, we denote by
$\mathrm{diag}(A)$ a diagonal matrix $B$ which has the same dimensions as
$A$ such that $B_{ii} = A_{ii}$ for all $i$.  For vector $v \in
\mathbb{R}^d$, we denote its $\ell_2$ norm by $||v||$, i.e. $||v|| =
\sqrt{\sum_{i=1}^d v_i^2}$.  We also denote by $v_j$ or $[v]_j$ the
$j$th coordinate of $v$. For a pair of vectors $u$ and $v$, we denote their
dot product by $\langle u, v \rangle$.

We define a word embedding as a function $f$ from $H$ to
$\mathbb{R}^m$ for some (relatively small) $m$.  For example, in our
experiments we vary $m$ between $50$ and $300$.  The word embedding
function maps the word to some real-vector representation, with the
intention to capture regularities in the vocabulary that are
topologically represented in the corresponding Euclidean space. For
example, all vocabulary words that correspond to city names could be
grouped together in that space.

Research on the derivation of word embeddings that capture various
regularities has greatly accelerated in recent
years. Various methods used for this purpose range from low-rank
approximations of co-occurrence statistics
\cite{deerwester-90-lsa,dhillon-15} to neural networks jointly learning
a language model \cite{bengio-03,mikolov-13} or models for other NLP tasks
\cite{collobert-08}.

\section{Canonical Correlation Analysis for Deriving Word Embeddings}
\label{section:cca}

One recent approach to derive word embeddings, developed by
\newcite{dhillon-15}, is through the use of canonical correlation
analysis, resulting in so-called ``eigenwords.'' CCA is a technique
for multiview dimensionality reduction. It assumes the existence of
two views for a set of data, similarly to co-training
\cite{yarowsky1995unsupervised,blum1998combining}, and then projects
the data in the two views in a way that maximizes the correlation
between the projected views.

\newcite{dhillon-15} used CCA to derive word embeddings through the
following procedure. They first break each document in a corpus of
documents into $n$ sequences of words of a fixed length $2k+1$, where
$k$ is a window size. For example, if $k=2$, the short document
``Harry Potter has been a best-seller'' would be broken into ``Harry
Potter has been a'' and ``Potter has been a best-seller.'' In each
such sequence, the middle word is identified as a pivot.

This leads to the construction of the following training set from a
set of documents: $\{
(w^{(i)}_1,\ldots,w^{(i)}_k,w^{(i)},w^{(i)}_{k+1},\ldots,w^{(i)}_{2k})
\mid i \in [n] \}$.  With abuse of notation, this is a multiset, as
certain words are expected to appear in certain contexts multiple
times.  Each $w^{(i)}$ is a pivot word, and the rest of the elements
are words in the sequence called ``the context words.''  With this
training set in mind, the two views for CCA are defined as following.

\begin{figure}[th]
  \centering
  \resizebox {\columnwidth} {!} {
\fbox{
    \begin{tikzpicture}
      
      \begin{scope}[xshift=-1cm,yshift=0cm,scale=1]
        \draw [draw=black, line width=0.35mm, fill=blue!50!white] (0,0) -- (1.5,0) -- (1.5,3) -- (0, 3) node[midway,above] {\scalebox{0.5}{$\vocabsize$}} -- (0,0); 
        \draw [draw=black, dotted] (0,0.2) -- (1.5,0.2); 
        \node at (-0.1,0.1) {\scalebox{0.5}{$1$}};
        \draw [draw=black, dotted] (0,0.4) -- (1.5,0.4);
        \node at (-0.1,0.3) {\scalebox{0.5}{$2$}};
        \draw [draw=black, dotted] (0,0.6) -- (1.5,0.6); 
        \draw [draw=black, dotted] (0,0.8) -- (1.5,0.8); 
        \draw [draw=black, dotted] (0,1) -- (1.5,1); 
        \draw [draw=black, dotted] (0,1.2) -- (1.5,1.2); 
        \draw [draw=black] (0,1.4) -- (1.5,1.4);
        \node at (-0.1,1.5) {\scalebox{0.5}{$i$}};
        \draw [draw=black] (0,1.6) -- (1.5,1.6);
        \draw [draw=black, dotted] (0,1.8) -- (1.5,1.8); 
        \draw [draw=black, dotted] (0,2) -- (1.5,2); 
        \draw [draw=black, dotted] (0,2.2) -- (1.5,2.2); 
        \draw [draw=black, dotted] (0,2.4) -- (1.5,2.4); 
        \draw [draw=black, dotted] (0,2.6) -- (1.5,2.6); 
        \draw [draw=black, dotted] (0,2.8) -- (1.5,2.8);
        \node at (-0.1,2.9) {\scalebox{0.5}{$n$}};
        \node at (0.75,-0.25) {$W$};
        \draw [draw=black,->] (0.75,1.5) -- (2,1.5);
        \filldraw (0.75,1.5) circle (0.25pt);
      \end{scope}
      
      \begin{scope}[xshift=2cm,yshift=1.25cm,scale=1]
        \draw [draw=black,fill=blue!50!white] (0,0) -- (4,0) -- (4,0.5) -- (0, 0.5) -- (0,0); 
        \draw [draw=black, dotted] (0.5,0) -- (0.5,0.5);
        \node at (0.25,0.6) {\scalebox{0.5}{$1$}};
        \node at (0.25,0.25) {\scalebox{1}{$0$}};
        \draw [draw=black, dotted] (1,0) -- (1,0.5);
        \node at (0.75,0.6) {\scalebox{0.5}{$2$}};
        \node at (0.75,0.25) {\scalebox{1}{$0$}};
        \draw [draw=black, dotted] (1.5,0) -- (1.5,0.5);
        \node at (1.25,0.25) {\scalebox{1}{$0$}};
        \draw [draw=black] (2,0) -- (2,0.5);
        \node at (1.75,0.25) {\scalebox{1}{$0$}};
        \draw [draw=black] (2.5,0) -- (2.5,0.5);
        \node at (2.25,0.6) {\scalebox{0.5}{$j$}};
        \node at (2.25,-0.2) {\scalebox{0.7}{$w^{(i)} = h_j$}};
        \node at (2.25,0.25) {\scalebox{1}{$1$}};
        \draw [draw=black, dotted] (3,0) -- (3,0.5);
        \node at (2.75,0.25) {\scalebox{1}{$0$}};
        \draw [draw=black, dotted] (3.5,0) -- (3.5,0.5);
        \node at (3.25,0.25) {\scalebox{1}{$0$}};
        \node at (3.75,0.6) {\scalebox{0.5}{$\vocabsize$}};
        \node at (3.75,0.25) {\scalebox{1}{$0$}};
      \end{scope}
      
      \begin{scope}[xshift=-1cm,yshift=-4cm,scale=1]
        \draw [draw=black, line width=0.35mm, fill=green!50!white] (0,0) -- (2,0) -- (2,3) -- (0, 3) 
        -- (0,0); 
        \draw [draw=black, dotted] (0,0.2) -- (2,0.2); 
        \node at (-0.1,0.1) {\scalebox{0.5}{$1$}};
        \draw [draw=black, dotted] (0,0.4) -- (2,0.4);
        \node at (-0.1,0.3) {\scalebox{0.5}{$2$}};
        \draw [draw=black, dotted] (0,0.6) -- (2,0.6); 
        \draw [draw=black, dotted] (0,0.8) -- (2,0.8); 
        \draw [draw=black, dotted] (0,1) -- (2,1); 
        \draw [draw=black, dotted] (0,1.2) -- (2,1.2); 
        \draw [draw=black, dotted] (0,1.4) -- (2,1.4);
        \node at (-0.1,1.5) {\scalebox{0.5}{$i$}};
        \draw [draw=black, dotted] (0,1.6) -- (2,1.6);
        \draw [draw=black, dotted] (0,1.8) -- (2,1.8); 
        \draw [draw=black, dotted] (0,2) -- (2,2); 
        \draw [draw=black, dotted] (0,2.2) -- (2,2.2); 
        \draw [draw=black, dotted] (0,2.4) -- (2,2.4); 
        \draw [draw=black, dotted] (0,2.6) -- (2,2.6); 
        \draw [draw=black, dotted] (0,2.8) -- (2,2.8);
        \node at (-0.1,2.9) {\scalebox{0.5}{$n$}};
        \draw [draw=black, dotted] (0.4,0) -- (0.4,3); 
        \node at (0.2,3.1) {\scalebox{0.5}{$1$}};
        \draw [draw=black, dotted] (0.8,0) -- (0.8,3); 
        \draw [draw=black, dotted] (1.2,0) -- (1.2,3); 
        \node at (1,3.1) {\scalebox{0.5}{$k$}};
        \draw [draw=black, dotted] (1.6,0) -- (1.6,3); 
        \node at (1.8,3.1) {\scalebox{0.5}{$2k$}};
        \draw [draw=black] (0.8,1.4) -- (1.2,1.4) -- (1.2,1.6) -- (0.8, 1.6) -- (0.8,1.4); 
        \draw [draw=black,->] (1,1.5) -- (2.5,1.5);
        \filldraw (1,1.5) circle (0.25pt);
        \node at (1,-0.25) {$C$};
      \end{scope}

      \begin{scope}[xshift=2cm,yshift=-2.75cm,scale=1]
        \draw [draw=black,fill=green!50!white] (0,0) -- (4,0) -- (4,0.5) -- (0, 0.5) -- (0,0); 
        \draw [draw=black, dotted] (0.5,0) -- (0.5,0.5);
        \node at (0.25,0.6) {\scalebox{0.5}{$1$}};
        \node at (0.25,0.25) {\scalebox{1}{$0$}};
        \draw [draw=black, dotted] (1,0) -- (1,0.5);
        \node at (0.75,0.6) {\scalebox{0.5}{$2$}};
        \node at (0.75,0.25) {\scalebox{1}{$0$}};
        \draw [draw=black, dotted] (1.5,0) -- (1.5,0.5);
        \node at (1.25,0.25) {\scalebox{1}{$0$}};
        \draw [draw=black] (2,0) -- (2,0.5);
        \node at (1.75,0.25) {\scalebox{1}{$0$}};
        \draw [draw=black] (2.5,0) -- (2.5,0.5);
        \node at (2.25,0.6) {\scalebox{0.5}{$j$}};
        \node at (2.25,-0.2) {\scalebox{0.7}{$w^{(i)}_k = h_j$}};
        \node at (2.25,0.25) {\scalebox{1}{$1$}};
        \draw [draw=black, dotted] (3,0) -- (3,0.5);
        \node at (2.75,0.25) {\scalebox{1}{$0$}};
        \draw [draw=black, dotted] (3.5,0) -- (3.5,0.5);
        \node at (3.25,0.25) {\scalebox{1}{$0$}};
        \node at (3.75,0.6) {\scalebox{0.5}{$\vocabsize$}};
        \node at (3.75,0.25) {\scalebox{1}{$0$}};
      \end{scope}

    \end{tikzpicture}
  }
}
  \caption{The word and context views represented as matrix $W$ and
    $C$. Each row in $W$ is a vector of length $|H|$, corresponding to
    a one-hot vector for the word in the example indexed by the row.
    Each row in $C$ is a vector of length $2k|H|$, divided into
    sub-vectors each of length $|H|$. Each such sub-vector is a
    one-hot vector for one of the $2k$ context words in the example
    indexed by the row.\label{fig:ccaviews}}
\end{figure}

\ignore{ 
  Formally, we define the first view through a sparse matrix $C \in
  \mathbb{R}^{n \times 2k\vocabsize}$ such that $C_{ij} = 1$ if for
\begin{equation}
  t = \begin{cases} 2k & \mbox{if } j \mod 2k = 0 \\ j \mod 2k & \mbox{otherwise} \end{cases}
\end{equation} 
and for $j' = \left\lceil \displaystyle\frac{j}{v} \right\rceil$, it
holds that $w^{(i)}_t = h_{j'}$.  This just means that $C$ is the
``context matrix'' such that each row in the context matrix is a
vector, consisting of $2k$ one-hot vectors, each of length
$\vocabsize$. Each such one-hot vector corresponds to a word that
fired in a specific index in the context.  In addition, we also define
a second view through a matrix $W \in \mathbb{R}^{n \times
  \vocabsize}$ such that $W_{ij} = 1$ if $w^{(i)} = h_j$.
}

We define the first view through a sparse ``context matrix'' $C \in
\mathbb{R}^{n \times 2k\vocabsize}$ such that each row in the matrix
is a vector, consisting of $2k$ one-hot vectors, each of length
$\vocabsize$. Each such one-hot vector corresponds to a word that
fired in a specific index in the context. In addition, we also define
a second view through a matrix $W \in \mathbb{R}^{n \times
  \vocabsize}$ such that $W_{ij} = 1$ if $w^{(i)} = h_j$. We present
both views of the training set in Figure~\ref{fig:ccaviews}.

Note that now the matrix $M = W^{\top} C$ is in
$\mathbb{R}^{\vocabsize \times (2k\vocabsize)}$ such that each element
$M_{ij}$ gives the count of times that $h_i$ appeared with the
corresponding context word and context index encoded by $j$.

Similarly, we define a matrix $D_1 = \mathrm{diag}(W^{\top}W)$ and
$D_2 = \mathrm{diag}(C^{\top}C)$. Finally, to get the word embeddings,
we perform singular value decomposition (SVD) on the matrix
$D_1^{-1/2} M D_2^{-1/2}$. Note that in its original form, CCA
requires use of $W^{\top}W$ and $C^{\top}C$ in their full form, and not
just the corresponding diagonal matrices $D_1$ and $D_2$; however, in
practice, inverting these matrices can be quite intensive
computationally and can lead to memory issues. As such, we approximate
CCA by using the diagonal matrices $D_1$ and $D_2$.

From the SVD step, we get two projections $U \in
\mathbb{R}^{\vocabsize \times m}$ and $V \in \mathbb{R}^{2k\vocabsize
  \times m}$ such that
\begin{equation}
D_1^{-1/2} M D_2^{-1/2} \approx U \Sigma V^{\top}
\end{equation}

\noindent where $\Sigma \in \mathbb{R}^{m \times m}$ is a diagonal
matrix with $\Sigma_{ii} > 0$ being the $i$th largest singular value
of $D_1^{-1/2} M D_2^{-1/2}$.  In order to get the final word
embeddings, we calculate $D_1^{-1/2} U \in \mathbb{R}^{\vocabsize \times
  m}$. Each row in this matrix corresponds to an $m$-dimensional
vector for the corresponding word in the vocabulary. This means that
$f(h_i)$ for $h_i \in H$ is the $i$th row of the matrix $D_1^{-1/2}
U$.  The projection $V$ can be used to get ``context embeddings.'' See
more about this in \newcite{dhillon-15}.

This use of CCA to derive word embeddings follows the
usual distributional hypothesis \cite{harris1957co} that most word
embeddings techniques rely on.  In the case of CCA, this hypothesis is
translated into action in the following way. CCA finds projections for
the contexts and for the pivot words which are most correlated. This
means that if a word co-occurs in a specific context many times
(either directly, or transitively through similarity to other words),
then this context is expected to be projected to a point ``close'' to
the point to which the word is projected. As such, if two words occur
in a specific context many times, these two words are expected to be
projected to points which are close to each other.

For the next section, we denote $X = W D_1^{-1/2}$ and $Y = C
D_2^{-1/2}$. To refer to the dimensions of $X$ and $Y$ generically, we
denote $d = \vocabsize$ and $d' = 2k\vocabsize$. In addition, we refer
to the column vectors of $U$ and $V$ as $u_1,\ldots,u_m$ and
$v_1,\ldots,v_m$.

\paragraph{Mathematical Intuition Behind CCA} The procedure that CCA
follows finds a projection of the two views in a shared space, such
that the correlation between the two views is maximized at each
coordinate, and there is minimal redundancy between the coordinates of
each view.  This means that CCA solves the following sequence of
optimization problems for $j \in [m]$ where $a_j \in \mathbb{R}^{1
  \times d}$ and $b_j \in \mathbb{R}^{1 \times d'}$:
\begin{align}
\arg\max_{a_j,b_j} &  & \mathrm{corr}(a_j W^{\top}, b_j C^{\top}) \\
\mbox{such that} & & \mathrm{corr}(a_j W^{\top}, a_k W^{\top}) = 0, & & k < j \\
		 &  & \mathrm{corr}(b_j C^{\top}, b_k C^{\top}) = 0, & & k < j
\end{align}
\noindent where $\mathrm{corr}$ is a function that accepts two vectors
and return the Pearson correlation between the pairwise elements of
the two vectors. The approximate solution to this optimization problem
(when using diagonal $D_1$ and $D_2$) is $\hat{a}^{\top}_i =
D_1^{-1/2} u_i$ and $\hat{b}^{\top}_i = D_2^{-1/2} v_i$ for $i \in
[m]$.

CCA also has a probabilistic interpretation as a maximum likelihood
solution of a latent variable model for two normal random vectors,
each drawn based on a third latent Gaussian vector
\cite{bach2005probabilistic}.

The way we describe CCA for deriving word embeddings is related to
Latent Semantic Indexing (LSI), which performs singular value
decomposition on the matrix $M$ directly, without doing any kind of
variance normalization.  \newcite{dhillon-15} describe some
differences between LSI and CCA.  The extra normalization step
decreases the importance of frequent words when doing SVD.

\section{Incorporating Prior Knowledge into Canonical Correlation Analysis}
\label{section:cca-prior-knowledge}

\begin{figure*}
  \centering
  \fbox{
  \resizebox {2\columnwidth} {!} {
  \begin{tikzpicture}

    \begin{scope}[xshift=-3.5cm,yshift=0cm,scale=1]
      \draw [draw=black, line width=0.35mm, fill=blue!50!white] (0,0) -- (1.5,0) -- (1.5,3) -- (0, 3) node[midway,above] {$d$} -- (0,0) node[midway,left] {$n$}; 
      \node at (0.75,1.5) {$W$};
    \end{scope}
    
    \begin{scope}[xshift=1.25cm,yshift=0cm,scale=1]
      \draw[draw=black, line width=0.35mm, fill=red!50!white] (0,0) -- (3,0) -- (3,3) -- (0, 3) node[midway,above] {$n$} -- (0,0) node[midway,left] {$n$}; 
      \node at (1.5,1.5) {$L$};
      \draw[decorate, decoration={brace, amplitude=10pt}] (3.2,-0.2) -- coordinate [below=10pt] (L2) (-0.2,-0.2) node {};
      \draw[-latex,rounded corners=1mm] (L2) -- (1.5, -1) -- (-1.5, -2.3) node[draw=none,fill=none,midway,right] {prior knowledge (optional)} -- (-1.5,-3.3); 
    \end{scope} 

    \begin{scope}[xshift=7.2cm,yshift=0cm,scale=1]
      \draw [draw=black, line width=0.35mm, fill=green!50!white] (0,0) -- (2,0) -- (2,3) -- (0, 3) node[midway,above] {$d'$} -- (0,0) node[midway,left] {$n$}; 
      \node at (1,1.5) {$C$};
    \end{scope}
    
    \begin{scope}[xshift=-6.9cm,yshift=-2.5cm,scale=1]
      \node at (0,-1.5) {$\mbox{diag}$};
      \draw[rounded corners=1mm,thick] (0.6,-2.45) -- (0.45,-2.45) -- (0.45,-0.55) -- (0.6,-0.55);
      \draw [draw=black, line width=0.35mm, fill=blue!50!white] (0.6,-0.75) -- (2.1,-0.75) -- (2.1,-2.25) -- (0.6, -2.25) -- (0.6,-0.75); 
      \draw[rounded corners=1mm,thick] (2.1,-2.45) -- (2.25,-2.45) -- (2.25,-0.55) -- (2.1,-0.55);
      \node at (1.35,-1.5) {$W^{\top}W$};
      \node at (2.5,-0.45) {$^{-\frac{1}{2}}$};
      \draw[decorate, decoration={brace, amplitude=10pt}] (2.3,-2.45) -- coordinate [below=10pt] (D1) (-0.3,-2.45) node {};
      \node at (1,-3.05) {$D_1$};
    \end{scope}

    \begin{scope}[xshift=-3.7cm,yshift=-2.5cm,scale=1]
      \node at (-0.5,-1.5) {$\times$};
      \draw [draw=black, line width=0.35mm, fill=blue!50!white] (0,-0.75) -- (3,-0.75) -- (3,-2.25) -- (0, -2.25) -- (0,-0.75); 
      \node at (1.5,-1.5) {$W^{\top}$};
      \node at (3.5,-1.5) {$\times$};
      \draw [draw=black, line width=0.35mm, fill=green!50!white] (4,0) -- (6,0) -- (6,-3) -- (4, -3) -- (4,0); 
      \node at (5,-1.5) {$C$};
      \node at (6.5,-1.5) {$\times$};
      \draw[decorate, decoration={brace, amplitude=10pt}] (6.1,-3) -- coordinate [below=10pt] (M) (-0.1,-3) node {};
      \node at (3,-3.5) {$M$};
    \end{scope}
    
    \begin{scope}[xshift=3.6cm,yshift=-2.5cm,scale=1]
      \node at (0,-1.5) {$\mbox{diag}$};
      \draw[rounded corners=1mm,thick] (0.6,-2.7) -- (0.45,-2.7) -- (0.45,-0.3) -- (0.6,-0.3);
      \draw [draw=black, line width=0.35mm, fill=green!50!white] (0.6,-0.5) -- (2.6,-0.5) -- (2.6,-2.5) -- (0.6, -2.5) -- (0.6,-0.5); 
      \draw[rounded corners=1mm,thick] (2.6,-2.7) -- (2.75,-2.7) -- (2.75,-0.3) -- (2.6,-0.3);
      \node at (1.6,-1.5) {$C^{\top}C$};
      \node at (3,-0.2) {$^{-\frac{1}{2}}$};
      \draw[decorate, decoration={brace, amplitude=10pt}] (2.8,-2.7) -- coordinate [below=10pt] (D2) (-0.3,-2.7) node {};
      \node at (1.3,-3.3) {$D_2$};
    \end{scope}
    
    
    \begin{scope}[xshift=8cm,yshift=-3.25cm,scale=1]
      \node at (-1,-0.75) {$\approx$};
      \draw [draw=black, line width=0.35mm, fill=blue!50!white] (0,0) -- (0.75,0) node[midway,above] {$m$} -- (0.75,-1.5) -- (0, -1.5)  -- (0,0) node[midway,left] {$d$}; 
      \node at (0.375,-0.75) {$U$};
      \node at (1,-0.75) {$\times$};
      \draw [draw=black, line width=0.35mm, fill=blue!0!white] (1.25,-0.35) -- (2,-0.35) -- (2,-1.1) -- (1.25, -1.1)  -- (1.25,-0.35) -- (2,-1.1); 
      \node at (1.625,-0.75) {$\Sigma$};
      \node at (2.25,-0.75) {$\times$};
      \draw [draw=black, line width=0.35mm, fill=green!50!white] (2.5,-0.35) -- (4.5,-0.35) node[midway,above] {$d'$} -- (4.5,-1.1) node[midway,right] {$m$} -- (2.5, -1.1)  -- (2.5,-0.35) ; 
      \node at (3.5,-0.75) {$V^{\top}$};
    \end{scope}

    \begin{scope}[xshift=-3.7cm,yshift=-3.3cm,scale=1]
      \draw[decorate, decoration={brace, amplitude=10pt}] (3.1,-3) -- coordinate [below=10pt] (M) (-3.5,-3) node {};
      \node at (-0.1,-3.6) {$X^{\top}$};
      \draw[decorate, decoration={brace, amplitude=10pt}] (10.1,-3) -- coordinate [below=10pt] (M) (3.9,-3) node {};
      \node at (7,-3.6) {$Y$};
    \end{scope}

  \end{tikzpicture}
}}
\caption{Introducing prior knowledge in CCA. $W \in \mathbb{R}^{n
    \times d}$ and $C \in \mathbb{R}^{n \times d'}$ denote the word
  and context views respectively. $L \in \mathbb{R}^{n \times n}$ is a
  Laplacian matrix encoded with the prior knowledge about the
  distances between the projections of $W$ and
  $C$.\label{fig:ccawithpk}}
\end{figure*}

In this section, we detail the technique we use to incorporate prior
knowledge into the derivation of canonical correlation analysis. The main
motivation behind our approach is to improve the optimization of
correlation between the two views by weighing them using the external
source of prior knowledge. The prior knowledge is based on lexical
resources such as WordNet, FrameNet and the Paraphrase Database. Our
approach follows a similar idea to the one proposed by
\newcite{koren2003visualization} for improving the visualization of
principal vectors with principal component analysis (PCA). It is
also related to Laplacian manifold regularization \cite{belkin2006manifold}.

An important notion in our derivation is that of a {\em Laplacian
  matrix}.  The Laplacian of an undirected weighted graph is an $n
\times n$ matrix where $n$ is the number of nodes in the graph. It
equals $D-A$ where $A$ is the adjacency matrix of the graph (so that
$A_{ij}$ is the weight for the edge $(i,j)$ in the graph, if it
exists, and $0$ otherwise) and $D$ is a diagonal matrix such that
$D_{ii} = \sum_j A_{ij}$.  The Laplacian is always a symmetric square
matrix such that the sum over rows (or columns) is $0$.  It is also
positive semi-definite.

We propose a generalization of CCA, in which we introduce a Laplacian
matrix into the derivation of CCA itself, as shown in
Figure~\ref{fig:ccawithpk}. We encode prior knowledge about the
distances between the projections of two views into the Laplacian.
The Laplacian allows us to improve the optimization of the
correlation between the two views by weighing them using the external
source of prior knowledge.

\subsection{Generalization of CCA}

We present three lemmas (proofs are given in Appendix A), followed by
our main proposition. These three lemmas are useful to prove our final
proposition.


The main proposition shows that CCA maximizes the distance between the
two view projections for any pair of examples $i$ and $j$, $i \neq j$,
while minimizing the two view projection distance for the two views of
an example $i$. The two views we discuss here in practice are the view
of the word through a one-hot representation, and the view which
represents the context words for a specific word token. The distance
between two view projections is defined in Eq.~\refeq{eq:dist}.

\begin{lemma}
  Let $X$ and $Y$ be two matrices of size $n \times d$ and $n \times
  d'$, respectively, for example, as defined in
  \S\ref{section:cca}. Assume that $\sum_{i=1}^n X_{ij} = 0$ for $j
  \in [d]$ and $\sum_{i=1}^n Y_{ij} = 0$ for $j \in [d']$.  Let $L$ be
  an $n \times n$ Laplacian matrix such that
  \begin{equation}
    L_{ij} = \begin{cases} n -1 & \mbox{if } i = j \\
      -1 & \mbox{if } i \neq j.
    \end{cases} \label{eq:lu}
  \end{equation}
  Then $X^{\top} L Y$ equals $X^{\top} Y$ up to a multiplication by a
  positive constant.
  \label{lemma1}
\end{lemma}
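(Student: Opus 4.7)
The plan is to exploit the very simple algebraic form of the specific Laplacian $L$ defined in the statement. Observe that $L$ can be written compactly as $L = nI_n - \mathbf{1}_n \mathbf{1}_n^{\top}$, where $I_n$ is the $n \times n$ identity and $\mathbf{1}_n \in \mathbb{R}^n$ is the all-ones vector. This follows directly from reading off the entries in Eq.~\eqref{eq:lu}: the diagonal picks up $n$ from $nI_n$ and $-1$ from $-\mathbf{1}_n \mathbf{1}_n^{\top}$, giving $n-1$, and the off-diagonal picks up only $-1$.

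With this rewriting in hand, I would simply expand
\begin{equation}
X^{\top} L Y \;=\; X^{\top}(nI_n - \mathbf{1}_n \mathbf{1}_n^{\top}) Y \;=\; n\, X^{\top} Y \;-\; (X^{\top} \mathbf{1}_n)(\mathbf{1}_n^{\top} Y).
\end{equation}
The next step is to invoke the centering hypothesis. The vector $\mathbf{1}_n^{\top} Y \in \mathbb{R}^{1 \times d'}$ has coordinates $\sum_{i=1}^n Y_{ij}$, which by assumption are $0$ for every $j \in [d']$; hence $\mathbf{1}_n^{\top} Y = 0$. (Equivalently, $X^{\top} \mathbf{1}_n = 0$ for the analogous reason.) The outer product term vanishes, and we are left with $X^{\top} L Y = n\, X^{\top} Y$, which is the claimed equality up to multiplication by the positive constant $n$.

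There is essentially no obstacle here beyond recognizing the rank-one-plus-scalar decomposition of $L$; the rest is a one-line expansion plus the centering assumption. The only point that deserves a sentence of comment is why the centering assumption actually holds for the matrices $X = W D_1^{-1/2}$ and $Y = C D_2^{-1/2}$ that appear in the CCA construction of Section~\ref{section:cca} — strictly speaking the lemma is stated for abstract centered $X, Y$, so it is enough to note that in the application one must pre-center the columns of $X$ and $Y$, which is standard when setting up CCA and does not affect the SVD-based embeddings derived afterwards.
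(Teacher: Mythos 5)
Your proof is correct and follows essentially the same route as the paper's: the paper writes $L_{kk'} = n\delta_{kk'} - 1$ entrywise, which is exactly your decomposition $L = nI_n - \mathbf{1}_n\mathbf{1}_n^{\top}$, and then kills the rank-one term using the centering assumption to obtain $X^{\top}LY = nX^{\top}Y$. Your closing remark about centering being needed in the actual application is a fair observation (the paper itself concedes in its ``Final Algorithm'' section that it skips centering in practice to preserve sparsity), but it does not affect the validity of the lemma as stated.
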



\begin{lemma}
  Let $A \in \mathbb{R}^{d \times d'}$. Then the rank $m$ thin-SVD of
  $A$ can be found by solving the following optimization problem:
  \begin{align}
    \displaystyle\max_{\begin{array}{ll}u_1,\ldots,u_m, \\v_1,\ldots,v_m\end{array}} & \sum_{i=1}^m u_i^{\top} A v_i \label{eq-obj} \\
    \mbox{such that } & ||u_i|| = ||v_i|| = 1 & i \in [m] \\
    & \langle u_i, u_j \rangle = \langle v_i, v_j \rangle = 0 & i \neq j
  \end{align}
  \noindent where $u_i \in \mathbb{R}^{d \times 1}$ denote the left
  singular vectors, and $v_i \in \mathbb{R}^{d' \times 1}$ denote the
  right singular vectors.
  \label{lemma:svd}
\end{lemma}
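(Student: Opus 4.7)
The plan is to reduce the objective to a trace and then to a weighted sum involving orthonormal matrices. I would stack the candidate vectors column-wise into $U \in \mathbb{R}^{d \times m}$ and $V \in \mathbb{R}^{d' \times m}$; the orthonormality constraints become $U^{\top}U = V^{\top}V = I_m$, and the objective is $\sum_{i=1}^m u_i^{\top} A v_i = \mathrm{tr}(U^{\top} A V)$. Using the full SVD $A = \tilde U \Sigma \tilde V^{\top}$, with singular values $\sigma_1 \geq \cdots \geq \sigma_r \geq 0$ on the diagonal of $\Sigma$, I would substitute $P = \tilde U^{\top} U$ and $Q = \tilde V^{\top} V$, both of which inherit orthonormal columns since $\tilde U$ and $\tilde V$ are orthogonal. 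This gives $\mathrm{tr}(U^{\top}AV) = \mathrm{tr}(P^{\top}\Sigma Q) = \sum_{k} \sigma_k \langle p_k, q_k \rangle$, where $p_k, q_k$ denote the $k$-th rows of $P$ and $Q$.

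For the upper bound, I would note that $PP^{\top}$ is the orthogonal projection onto the column space of $P$, so $\|p_k\|^2 = (PP^{\top})_{kk} \in [0,1]$ and $\sum_k \|p_k\|^2 = \mathrm{tr}(P^{\top}P) = m$, with symmetric statements for $Q$. Cauchy--Schwarz followed by AM--GM gives $\langle p_k, q_k \rangle \leq \tfrac{1}{2}(\|p_k\|^2 + \|q_k\|^2)$, so it suffices to bound each of $\sum_k \sigma_k \|p_k\|^2$ and $\sum_k \sigma_k \|q_k\|^2$. Since the $\sigma_k$ are non-increasing and the weights $\|p_k\|^2$ lie in $[0,1]$ and sum to $m$, the sum is maximized by concentrating weight $1$ on the top $m$ indices, yielding $\sum_k \sigma_k \|p_k\|^2 \leq \sum_{k=1}^m \sigma_k$. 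Hence $\mathrm{tr}(U^{\top}AV) \leq \sum_{k=1}^m \sigma_k$. Attainability is a one-line check: setting $u_i = \tilde u_i$ and $v_i = \tilde v_i$ makes $P$ and $Q$ consist of the first $m$ standard basis columns, so $\langle p_k, q_k \rangle = 1$ for $k \in [m]$ and $0$ otherwise, delivering $\mathrm{tr}(U^{\top}AV) = \sum_{k=1}^m \sigma_k$.

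The main obstacle I anticipate is the weight-distribution step: the inequality $\sum_k \sigma_k w_k \leq \sum_{k=1}^m \sigma_k$ for non-increasing $\sigma_k$ and weights $w_k \in [0,1]$ summing to $m$, though intuitively clear, requires a short rearrangement argument (a special case of Hardy--Littlewood--P\'olya, equivalently of Ky Fan's maximum principle); alternatively one can invoke von Neumann's trace inequality applied to $QP^{\top}$ (which has rank at most $m$ and operator norm at most $1$) to package the entire upper bound in a single line. The remaining manipulations are routine matrix algebra.
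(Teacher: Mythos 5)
Your proposal is correct, and although it starts from the same place as the paper's proof (expand the objective in the singular basis of $A$, so that it becomes $\sum_k \sigma_k \sum_i \langle u_i,u'_k\rangle\langle v_i,v'_k\rangle$ --- your $\sum_k \sigma_k \langle p_k,q_k\rangle$ is exactly this quantity with $P=\tilde U^{\top}U$, $Q=\tilde V^{\top}V$), it diverges at the one step that actually matters, and for the better. The paper applies Cauchy--Schwarz to bound the \emph{unweighted} sum $\sum_{k}\sum_i \langle u_i,u'_k\rangle\langle v_i,v'_k\rangle$ by $m$, observes that the choice $u_i=u'_i$, $v_i=v'_i$ attains equality there and gives objective value $\sum_{k=1}^m\sigma_k$, and concludes maximality. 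That inference is not valid as stated: the objective is the $\sigma$-weighted sum, and a feasible point that maximizes the unweighted sum need not maximize the weighted one, so the paper never actually establishes the upper bound $\sum_{k=1}^m\sigma_k$ on the objective. Your argument closes precisely this gap: you bound the weighted sum directly, using $\langle p_k,q_k\rangle\le\tfrac12(\|p_k\|^2+\|q_k\|^2)$ together with the facts that the row norms satisfy $\|p_k\|^2\in[0,1]$ and $\sum_k\|p_k\|^2=m$ (and likewise for $Q$), and then the rearrangement/majorization step that a $[0,1]$-valued weight vector summing to $m$ paired with non-increasing non-negative $\sigma_k$ is maximized by concentrating the weight on the top $m$ indices. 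This is the Ky Fan/von Neumann route, and it is the standard rigorous way to prove this lemma. Two small points to tidy up when writing it out: state explicitly that the final inequality uses $\sigma_k\ge 0$ (so that term-by-term bounding of $\langle p_k,q_k\rangle$ is legitimate), and include the short proof of the weight-distribution inequality rather than leaving it as a remark, since it is the only non-mechanical step.
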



The last utility lemma we describe shows that interjecting the
Laplacian between the two views can be expressed as a weighted sum of
the distances between the projections of the two views (these
distances are given in Eq.~\refeq{eq:dist}), where the weights come
from the Laplacian.

\begin{lemma}
  Let $u_1,\ldots,u_m$ and $v_1,\ldots,v_m$ be two sets of vectors of
  length $d$ and $d'$ respectively.  Let $L \in \mathbb{R}^{n \times
    n}$ be a Laplacian and $X \in \mathbb{R}^{n \times d}$ and $Y \in
  \mathbb{R}^{n \times d'}$.  Then:
  \begin{equation}
    \sum_{k=1}^m (X u_k)^{\top} L \left( Y v_k \right) = \sum_{i,j} -L_{ij}\left( d^m_{ij} \right)^2,
  \end{equation}
  \noindent where
  \begin{equation}
    d^m_{ij} = \sqrt{\frac{1}{2}\left( \sum_{k=1}^m \left( [Xu_k]_i - [Yv_k]_j \right)^2 \right)}. \label{eq:dist}
  \end{equation}
  \label{lemma3}
\end{lemma}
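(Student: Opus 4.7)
The plan is a straight expansion plus the defining row/column-sum property of a Laplacian. Write $x_k = X u_k \in \mathbb{R}^n$ and $y_k = Y v_k \in \mathbb{R}^n$, so the left-hand side is
\begin{equation}
\sum_{k=1}^m x_k^{\top} L\, y_k \;=\; \sum_{k=1}^m \sum_{i,j} L_{ij}\,[x_k]_i [y_k]_j.
\end{equation}
On the right-hand side I would substitute the definition of $d^m_{ij}$ and expand the square:
\begin{equation}
\sum_{i,j} -L_{ij}(d^m_{ij})^2 \;=\; -\tfrac{1}{2}\sum_{i,j} L_{ij} \sum_{k=1}^m \bigl([x_k]_i^2 - 2[x_k]_i[y_k]_j + [y_k]_j^2\bigr).
\end{equation}

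The key observation, and the only nontrivial input, is that since $L$ is a Laplacian it satisfies $\sum_j L_{ij} = 0$ for every $i$ and $\sum_i L_{ij}=0$ for every $j$. Therefore the two ``pure'' terms collapse:
\begin{align}
\sum_{i,j} L_{ij}\,[x_k]_i^2 &= \sum_i [x_k]_i^2 \sum_j L_{ij} = 0, \\
\sum_{i,j} L_{ij}\,[y_k]_j^2 &= \sum_j [y_k]_j^2 \sum_i L_{ij} = 0.
\end{align}

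What remains on the right-hand side is exactly the cross term, with the factor $-\tfrac{1}{2}$ canceling the $-2$:
\begin{equation}
-\tfrac{1}{2}\sum_{i,j} L_{ij} \sum_{k=1}^m (-2)[x_k]_i[y_k]_j \;=\; \sum_{k=1}^m \sum_{i,j} L_{ij}\,[x_k]_i[y_k]_j,
\end{equation}
which matches the left-hand side. There is no real obstacle here; the only thing to be careful about is invoking both the row-sum and the column-sum vanishing property (the Laplacian is symmetric in this paper, so either could be used, but being explicit about which sum index is being frozen makes the two ``square'' terms drop cleanly). The proof is thus a one-line identity after expansion.
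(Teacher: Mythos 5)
Your proof is correct and follows essentially the same route as the paper's: expand $(d^m_{ij})^2$, observe that the two pure square terms vanish because the rows and columns of a Laplacian sum to zero, and match the remaining cross term to $\sum_k (Xu_k)^{\top} L (Yv_k)$. Your write-up is in fact slightly more careful than the paper's, since you state explicitly which index is frozen when each square term collapses.
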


The following proposition is our main result for this section.

\begin{prop}
  The matrices $U \in \mathbb{R}^{d \times m}$ and $V \in
  \mathbb{R}^{d' \times m}$ that CCA computes are the $m$-dimensional
  projections that maximize
  \begin{equation}
    \sum_{i, j} \left(d^m_{ij} \right)^2 - n \sum_{i=1}^n \left( d^m_{ii} \right)^2, \label{eq:dist-sum}
  \end{equation}
  \noindent where $d^m_{ij}$ is defined as in Eq.~\refeq{eq:dist} for
  $u_1,\ldots,u_m$ being the columns of $U$ and $v_1,\ldots,v_m$ being
  the columns of $V$.
\label{prop1}
\end{prop}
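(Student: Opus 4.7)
The plan is to chain the three lemmas together in a short, essentially linear argument. Recall that $X = WD_1^{-1/2}$ and $Y = CD_2^{-1/2}$, so the matrix on which CCA performs SVD, $D_1^{-1/2} M D_2^{-1/2}$, is exactly $X^{\top} Y$. By definition, the columns of $U$ and $V$ are the left and right singular vectors of this matrix, and by Lemma~\ref{lemma:svd} they therefore maximize $\sum_{i=1}^m u_i^{\top}(X^{\top} Y) v_i$ over all orthonormal $m$-tuples $\{u_i\}$ and $\{v_i\}$.

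Next, I would invoke Lemma~\ref{lemma1} (assuming, or remarking that in practice one centers $X$ and $Y$ as Lemma~\ref{lemma1} requires) to rewrite $X^{\top} Y$ as a positive multiple of $X^{\top} L Y$, where $L$ is the Laplacian with $L_{ii} = n-1$ and $L_{ij} = -1$ for $i \neq j$. Multiplication by a positive constant does not change the argmax, so the columns of $U$ and $V$ equivalently maximize
\begin{equation}
\sum_{i=1}^m u_i^{\top} X^{\top} L Y v_i = \sum_{i=1}^m (X u_i)^{\top} L (Y v_i).
\end{equation}

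Applying Lemma~\ref{lemma3} to the right-hand side converts this into $\sum_{i,j} -L_{ij} (d^m_{ij})^2$. Plugging in the specific entries of $L$ from Lemma~\ref{lemma1}, namely $-L_{ii} = -(n-1)$ and $-L_{ij} = 1$ for $i \neq j$, yields
\begin{equation}
\sum_{i \neq j} (d^m_{ij})^2 - (n-1) \sum_{i=1}^n (d^m_{ii})^2,
\end{equation}
and a single line of bookkeeping (writing $\sum_{i\neq j} = \sum_{i,j} - \sum_{i=j}$ and combining the diagonal terms) rearranges this to the target expression $\sum_{i,j} (d^m_{ij})^2 - n \sum_{i=1}^n (d^m_{ii})^2$.

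The only genuinely delicate step is the appeal to Lemma~\ref{lemma1}: its hypothesis that the columns of $X$ and $Y$ sum to zero is not literally true for the one-hot $W$ and $C$ used in the eigenword construction, so I would either note that this is the standard centering step implicit in CCA, or argue that the uncentered contribution produces an additive constant that does not affect the argmax. Aside from that, the argument is just the composition of Lemma~\ref{lemma:svd} (to get a bilinear objective), Lemma~\ref{lemma1} (to introduce $L$), and Lemma~\ref{lemma3} (to translate the bilinear form into squared distances), followed by elementary algebra.
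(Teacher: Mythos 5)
Your proposal is correct and follows essentially the same route as the paper's proof: it chains Lemma~\ref{lemma:svd}, Lemma~\ref{lemma1}, and Lemma~\ref{lemma3} in exactly the roles the paper assigns them, merely traversing the chain in the opposite direction (from the SVD characterization to the distance objective rather than vice versa), and the final bookkeeping with $-L_{ii}=-(n-1)$ and $-L_{ij}=1$ checks out. Your explicit caveat about the centering hypothesis of Lemma~\ref{lemma1} is well taken --- the paper only acknowledges this later, in the ``Final Algorithm'' discussion, where it concedes that the data is not actually centered in practice.
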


\begin{proof}
  According to Lemma~\ref{lemma3}, the objective in
  Eq.~\refeq{eq:dist-sum} equals $\sum_{k=1}^m (Xu_k)^{\top} L (Y
  v_k)$ where $L$ is defined as in Eq.~\refeq{eq:lu}.  Therefore,
  maximizing Eq.~\refeq{eq:dist-sum} corresponds to maximization of
  $\sum_{k=1}^m (Xu_k)^{\top} L (Y v_k)$ under the constraints that
  the $U$ and $V$ matrices have orthonormal vectors. Using
  Lemma~\ref{lemma:svd}, it can be shown that the solution to this
  maximization is done by doing singular value decomposition on
  $X^{\top} L Y$. According to Lemma~\ref{lemma1}, this corresponds to
  finding $U$ and $V$ by doing singular value decomposition on
  $X^{\top} Y$, because a multiplicative constant does not change the
  value of the right/left singular vectors.
\end{proof}

The above proposition shows that CCA tries to find projections of both
views such that the distances between the two views for pairs of
examples with indices $i \neq j$ are maximized (first term in
Eq.~\refeq{eq:dist-sum}), while minimizing the distance between the
projections of the two views for a specific example (second term in
Eq.~\refeq{eq:dist-sum}).  Therefore, CCA tries to project a context
and a word in that context to points that are close to each other in a shared
space, while maximizing the distance between a context and a word which
do not often co-occur together.

As long as $L$ is a Laplacian, Proposition~\ref{prop1} is still true, only with the
maximization of the objective


\begin{equation}
  \sum_{i, j} - L_{ij} \left( d^m_{ij} \right)^2, \label{eq:dis}
\end{equation}

\noindent where $L_{ij} \le 0$ for $i \neq j$ and $L_{ii} \ge 0$.
This result lends itself to a
generalization of CCA, in which we use predefined weights for the
Laplacian that encode some prior knowledge about the distances that
the projections of two views should satisfy.

If the weight $- L_{ij}$ is large for a specific $(i,j)$, then we
will try harder to maximize the distance between one view of example
$i$ and the other view of example $j$ (i.e. we will try to project the
word $w^{(i)}$ and the context of example $j$ into distant points in
the space).


This means that in the current formulation, $- L_{ij}$ plays the
role of a {\em dissimiliarity} indicator between pairs of words. The
more dissimilar words are, the larger the weight, and then the
more distant the projections are for the contexts and the words.

\subsection{From CCA with Dissimilarities to CCA with Similarities}

It is often more convenient to work with {\em similarity} measures
between pairs of words. To do that, we can retain the same formulation
as before with the Laplacian, where $- L_{ij}$ now denotes a measure
of similarity. Now, instead of maximizing the objective in
Eq.~\refeq{eq:dis}, we are required to minimize it.

It can be shown that such mirror formulation can be done with an
algorithm similar to CCA, leading to a proposition in the style of
Proposition~\ref{prop1}.  To solve this minimization formulation, we
just need to choose the singular vectors associated with the {\em
  smallest} $m$ singular values (instead of the largest).

Once we change the CCA algorithm with the Laplacian to choose these
projections, we can define $L$, for example, based on a similarity
graph. The graph is an undirected graph that has $\vocabsize$ nodes,
for each word in the vocabulary, and there is an edge between a pair
of words whenever the two words are similar to each other based on
some external source of information, such as WordNet (for example, if
they are synonyms).

We then define the Laplacian $L$ such that $L_{ij} = -1$ if $i$ and
$j$ are adjacent in the graph (and $i \neq j$), $L_{ii}$ is the
degree of the node $i$ and $L_{ij} = 0$ in all other cases. By using
this variant of CCA, we strive to maximize the distance of
the two views between words which are adjacent in the graph (or
continuing the example above, maximize the distance between words
which are not synonyms).  In addition, the fewer adjacent nodes a word
has (or the more synonyms it has), the less important it is to
minimize the distance between the two views of that given word.

\begin{figure}[t!]
{\small
\framebox{\parbox{3in}{

{\bf Inputs:} Set of examples $\{ (w^{(i)}_1,\ldots,w^{(i)}_k,w^{(i)},w^{(i)}_{k+1},\ldots,w^{(i)}_{2k}) \mid i \in [n] \}$, an integer $m$, an $\alpha \in (0,1]$, an undirected graph $G$ over $\vocab$, an integer $N$.

{\bf Data structures:}

A matrix $M$ of size $\vocabsize \times (2k\vocabsize)$ (cross-covariance matrix), a matrix $U$ corresponding to the word embeddings

{\bf Algorithm:}

(Cross-covariance estimation)
$\forall i,j \in [n]$ such that $|i-j| \le N$

\begin{itemizesquish}{-0.3em}{0.5em}

\item If $i = j$, increase $M_{rs}$ by $1$ for $r$ denoting the index of word $w^{(i)}$ and for all $s$ denoting the context indices of words
$w^{(i)}_1,\ldots,w^{(i)}_k$ and $w^{(i)}_{k+1},\ldots,w^{(i)}_{2k}$.

\item  If $i \neq j$ and word $w^{(i)}$ is connected to word $w^{(j)}$ in $G$, increase $M_{rs}$ by $\alpha$ for $r$ denoting the index of word $w^{(i)}$ and for all $s$ denoting the context indices of words
$w^{(j)}_1,\ldots,w^{(j)}_k$ and $w^{(j)}_{k+1},\ldots,w^{(j)}_{2k}$.

$\,$

\item Calculate $D_1$ and $D_2$ as specified in \S\ref{section:cca}.

\end{itemizesquish}

(Singular value decomposition step)
\begin{itemizesquish}{-0.3em}{0.5em}
\item Perform singular value decomposition on $D_1^{-1/2} M D_2^{-1/2}$ to get a matrix $U \in \mathbb{R}^{\vocabsize \times m}$.
\end{itemizesquish}

(Word embedding projection)
\begin{itemizesquish}{-0.3em}{0.5em}
\item For each word $h_i$ for $i \in [\vocabsize]$ return the word embedding that corresponds with the $i$th row of $U$.
\end{itemizesquish}
}}
}
\caption{The CCA-like algorithm that returns word embeddings with
  prior knowledge encoded based on a similarity
  graph.\label{figure:alg}}
\end{figure}

\subsection{Final Algorithm}

In order to use an arbitrary Laplacian matrix with CCA, we require
that the data is centered, i.e. that the average over all examples of
each of the coordinates of the word and context vectors is $0$.
However, such a prerequisite would make the matrices $C$ and $W$ dense
(with many non-zero values), and hard to maintain in memory, and would
also make singular value decomposition inefficient.

As such, we do not center the data to keep it sparse, and as such, use
a matrix $L$ which is not strictly a Laplacian, but that behaves
better in practice.\footnote{We note that other decompositions, such
  as PCA, also require centering of the data, but in case of sparse
  data matrix, this step is not performed.} Given the graph mentioned
in \S\ref{section:cca-prior-knowledge} which is extracted from an
external source of information, we use $L$ such that $L_{ij} = \alpha$
for an $\alpha \in (0,1)$ which is treated as a smoothing factor for
the graph (see below the choices of $\alpha$) if $i$ and $j$ are not
adjacent in the graph, $L_{ij} = 0$ if $i \neq j$ are adjacent, and
finally $L_{ii} = 1$ for all $i \in [n]$.  Therefore, this matrix is
symmetric, and the only constraint it does not satisfy is that of rows
and columns summing to $0$.

Scanning the documents and calculating the statistic matrix with the
Laplacian is computationally infeasible with a large number of tokens
given as input. It is quadratic in that number. As such, we make
another modification to the algorithm, and calculate a ``local''
Laplacian.  The modification requires an integer $N$ as input (we use
$N=12$), and then it makes updates to pairs of word tokens only if
they are within an $N$-sized window of each.  The final algorithm we
use is described in Figure~\ref{figure:alg}. The algorithm works
by directly computing the co-occurrence matrix $M$ (instead of maintaining $W$ and $C$).
It does so by increasing by 1 any cells corresponding to word-context co-occurrence in the documents
and by $\alpha$ any cells corresponding to word and contexts that are connected in the graph.


\section{Experiments}
\label{section:experiments}

In this section we describe our experiments.

\subsection{Experimental Setup}

\paragraph{Training Data}
We used three datasets, {\sc Wiki1}, {\sc Wiki2} and {\sc Wiki5}, all
based on the first $1$, $2$ and $5$ billion words from Wikipedia
respectively.\footnote{We downloaded the data from
  \url{https://dumps.wikimedia.org/}, and preprocessed it using the
  tool available at \url{http://mattmahoney.net/dc/textdata.html}.}
Each dataset is broken into chunks of length $13$ (window sizes of 6),
corresponding to a document. The above Laplacian $L$ is calculated
within each document separately.  This means that $-L_{ij}$ is $1$
only if $i$ and $j$ denote two words that appear in the same
document.
This is done to make the calculations computationally
feasible. We calculate word embeddings for the top most frequent 200K
words.

\paragraph{Prior Knowledge Resources} We consider three sources of
prior knowledge: WordNet \cite{miller1995wordnet}, the Paraphrase
Database of \newcite{ganitkevitch-13-ppdb}, abbreviated as
PPDB,\footnote{We use the XL subset of the PPDB.} and FrameNet
\cite{baker-98-framenet}. Since FrameNet and WordNet index words in
their base form, we use WordNet's stemmer to identify the base form
for the text in our corpora whenever we calculate the Laplacian graph.
For WordNet, we have an edge in the graph if one word is a synonym,
hypernym or hyponym of the other.  For PPDB, we have an edge if one
word is a paraphrase of the other, according to the database. For
FrameNet, we connect two words
in the graph if they appear in the same frame.

\paragraph{System Implementation} We modified the implementation of
the \texttt{SWELL} Java
package\footnote{\url{https://github.com/paramveerdhillon/swell}.} of
\newcite{dhillon-15}.
Specifically, we needed to modify the
loop that iterates over words in each document to a
nested loop that iterates over pairs of words, in order to compute a sum of
the form $\sum_{ij} X_{ri} L_{ij} Y_{js}$.\footnote{Our
  implementation and the word embeddings that we calculated are
  available at
  \url{http://cohort.inf.ed.ac.uk/cohort/eigen/}.  }
\newcite{dhillon-15} use window size $k=2$,
which we retain in our experiments.\footnote{We also use the
  square-root transformation as mentioned in \newcite{dhillon-15}
  which controls the variance in the counts accumulated from the
  corpus. See a justification for this transform in \newcite{stratos2015model}.}

\begin{table*}
  \begin{center}
    {\small
      \begin{tabular}{|l|l|cccc|cccc|cccc|}
	\hhline{~~------------}
        \multicolumn{2}{l|}{} & \multicolumn{4}{c|}{Word similarity average} & \multicolumn{4}{c|}{Geographic analogies} & \multicolumn{4}{c|}{NP bracketing} \\ 
        \multicolumn{2}{l|}{} & NPK & WN & PD & FN & NPK & WN & PD & FN & NPK & WN & PD & FN  \\
        \hline
        \multirow{5}{*}{\rotatebox{91}{Retrofitting}} & \multicolumn{1}{|l|}{Glove} & \tikzmark{A} 59.7  & 63.1 & 64.6 & 57.5   &  \tikzmark{C} {\bf 94.8} & 75.3 & 80.4 & {\bf 94.8} &  \tikzmark{E} 78.1 & 79.5 & 79.4 & 78.7  \\
        & \multicolumn{1}{|l|}{Skip-Gram} &  64.1  &65.5 & {\bf 68.6}  & 62.3 &  87.3 & 72.3 & 70.5 & 87.7 & 79.9 & 80.4 & 81.5 & 80.5  \\
        & \multicolumn{1}{|l|}{Global Context} & 44.4 & 50.0 &50.4  &47.3  & 7.3 & 4.5 & 18.2 & 7.3 & 79.4 & 79.1 & 80.5 & 80.2 \\
        & \multicolumn{1}{|l|}{Multilingual} &62.3  &66.9  &68.2  &62.8  & 70.7 & 46.2 & 53.7 & 72.7 & 81.9 & 81.8 & {\bf 82.7} & 82.0  \\
        & \multicolumn{1}{|l|}{Eigen (CCA)} & 59.5 &62.2  & 63.6 &61.4 \tikzmark{B} & 89.9  & 79.2  & 73.5  & 89.9 \tikzmark{D}  & 81.3 &81.7  &81.2  &80.7 \tikzmark{F}  \\
        \hline
	\hline
       \multirow{5}{*}{\rotatebox{91}{CCAPrior}} & $\alpha = 0.1$ & \tikzmark{G} - &59.1 & 59.6 & 59.5 & \tikzmark{I} - & 88.9 & 88.7 & 89.9 & \tikzmark{K} - &81.0 & {\bf 82.4} &81.0\\
        & ${\alpha = 0.2}$ & - &59.9& {\bf 60.6} &60.0& - & 89.1 & 91.3 & 90.1 &- &81.0 &81.3&80.7\\
        & ${\alpha = 0.5}$ & - &59.9 &59.7 &59.6&- & 86.9 &  89.3 & 89.3 &-&81.8 &81.4&80.9 \\
        & ${\alpha = 0.7}$ & - &60.7  &59.3 &59.5&- & 86.9& 89.3 & 92.9 &- &80.3 &81.2&80.8   \\
        & ${\alpha = 0.9}$ & - &60.6  &59.6 &58.9 \tikzmark{H}  &-& 89.1 & {\bf 93.2} & 92.5 \tikzmark{J}  &-&81.3&80.7&81.0\tikzmark{L}   \\
        \hline
        \multirow{5}{*}{\rotatebox{91}{CCAPrior+RF}} & $\alpha = 0.1$ & \tikzmark{M} - &61.9 &63.6 &61.5&\tikzmark{O} -& 76.0 & 71.9 & 89.9 &\tikzmark{Q}  -&81.4 &81.7&81.2\\
        & $\alpha = 0.2$ & - &62.6 & {\bf 64.9} &61.6&- &78.0  &69.3 & 90.1 &-&81.7 &81.1&80.6 \\
        & $\alpha = 0.5$ & - &62.7 &63.7 &61.4 &-& 74.9& 67.3&92.9 &-& {\bf 81.9} &81.4&80.0\\
        & $\alpha = 0.7$ & - &63.3 &63.0&61.0&-& 77.4& 65.6& 90.3&-&81.0 &80.8&80.4\\
        & $\alpha = 0.9$ & - &62.0 &63.3&60.4\tikzmark{N} &-& 77.3& 66.2 & {\bf 92.5} \tikzmark{P} &-& 81.0&80.7&80.4\tikzmark{R}   \\
        \hline
      \end{tabular}
    }
  \end{center}
  \caption{Results for the word similarity datasets, geographic
    analogies and NP bracketing. The first upper blocks (A--C)
    present the results with retrofitting. NPK stands for no prior
    knowledge (no retrofitting is used), WN for WordNet, PD for PPDB
    and FN for FrameNet.  Glove, Skip-Gram, Global Context,
    Multilingual and Eigen are the word embeddings of
    \protect\newcite{pennington2014glove},
    \protect\newcite{mikolov-13a},
    \protect\newcite{huang2012improving},
    \protect\newcite{faruqui14multi} and \protect\newcite{dhillon-15}
    respectively. The second middle blocks (D--F) show the results of our eigenword
    embeddings encoded with prior knowledge using our method. Each row
    in the block corresponds to a specific use of an $\alpha$ value
    (smoothing factor), as described in Figure~\ref{figure:alg}. In
    the lower blocks (G--I) we take the word embeddings from the
    second block, and retrofit them using the method of
    \protect\newcite{faruqui:2015:Retro}. Best results in each block
    are in bold. }
  \label{table:results}
\end{table*}

\subsection{Baselines} 

\paragraph{Off-the-shelf Word Embeddings}
We compare our word embeddings with existing state-of-the-art word
embeddings, such as Glove \cite{pennington2014glove}, Skip-Gram
\cite{mikolov-13a}, Global Context \cite{huang2012improving} and
Multilingual \cite{faruqui14multi}. We also compare our word
embeddings with the Eigen word embeddings of \newcite{dhillon-15}
without any prior knowledge.

\paragraph{Retrofitting for Prior Knowledge}

We compare our approach of incorporating prior knowledge into the
derivation of CCA against the previous works where prior knowledge is
introduced in the off-the-shelf embeddings as a post-processing step
\cite{faruqui:2015:Retro,rothe-15-acl}. In this paper, we focus on the
retrofitting approach of \newcite{faruqui:2015:Retro}.
 
Retrofitting works by optimizing an objective function which has two
terms: one that tries to keep the distance between the word vectors
close to the original distances, and the other which enforces the
vectors of words which are adjacent in the prior knowledge graph to be
close to each other in the new embedding space. We use the
retrofitting
package\footnote{\url{https://github.com/mfaruqui/retrofitting}.} to
compare our results in different settings against the results of
retrofitting of \newcite{faruqui:2015:Retro}.

\subsection{Evaluation Benchmarks} 

We evaluated the quality of our eigenword embeddings on three
different tasks: word similarity, geographic analogies and NP
bracketing.

\ignore{ We compare our results in different settings against the
  results of retrofitting of \newcite{faruqui:2015:Retro}.
  Retrofitting works by optimizing an objective function which has two
  terms: one that tries to keep the distance between the word vectors
  close to the original distances, and the other which enforces the
  vectors of words which are adjacent in the prior knowledge graph to
  be close to each other in the new embedding space.  }

\AddThispageHook{
\begin{tikzpicture}[overlay,remember picture]
  \node [yshift=.9ex] at ( $(pic cs:A) !.5! (pic cs:B)$ ){ \fontsize{60}{60}\selectfont\textbf{\color{gray!40}A} };
  \node [yshift=.9ex] at ( $(pic cs:C) !.5! (pic cs:D)$ ){ \fontsize{60}{60}\selectfont\textbf{\color{gray!40}B} };
  \node [yshift=.9ex] at ( $(pic cs:E) !.5! (pic cs:F)$ ){ \fontsize{60}{60}\selectfont\textbf{\color{gray!40}C} };
  \node [yshift=.9ex] at ( $(pic cs:G) !.5! (pic cs:H)$ ){ \fontsize{60}{60}\selectfont\textbf{\color{gray!40}D} };
  \node [yshift=.9ex] at ( $(pic cs:I) !.5! (pic cs:J)$ ){ \fontsize{60}{60}\selectfont\textbf{\color{gray!40}E} };
  \node [yshift=.9ex] at ( $(pic cs:K) !.5! (pic cs:L)$ ){ \fontsize{60}{60}\selectfont\textbf{\color{gray!40}F} };
  \node [yshift=.9ex] at ( $(pic cs:M) !.5! (pic cs:N)$ ){ \fontsize{60}{60}\selectfont\textbf{\color{gray!40}G} };
  \node [yshift=.9ex] at ( $(pic cs:O) !.5! (pic cs:P)$ ){ \fontsize{60}{60}\selectfont\textbf{\color{gray!40}H} };
  \node [yshift=.9ex] at ( $(pic cs:Q) !.5! (pic cs:R)$ ){ \fontsize{60}{60}\selectfont\textbf{\color{gray!40}I} };
\end{tikzpicture}
}

\paragraph{Word Similarity} For the word similarity task we
experimented with 11 different widely used benchmarks.
The WS-353-ALL dataset \cite{finkelstein2002placing} consists of
353 pairs of English words with their human similarity ratings. Later,
\newcite{agirre2009study} re-annotated WS-353-ALL for similarity
(WS-353-SIM) and relatedness (WS-353-REL) with specific distinctions
between them. The SimLex-999 dataset
\cite{DBLP:journals/corr/HillRK14} was built to measure how well
models capture similarity, rather than relatedness or association. The
MEN-TR-3000 dataset \cite{bruni2014multimodal} consists of 3000 word
pairs sampled from words that occur at least 700 times in a large web
corpus. The datasets, MTurk-287 \cite{radinsky2011word} and MTurk-771
\cite{halawi2012large}, were scored by Amazon Mechanical Turk workers
for relatedness of English word pairs. The YP-130
\cite{yang2005measuring} and Verb-143 \cite{baker2014unsupervised}
datasets were developed for verb similarity predictions. The last two
datasets, MC-30 \cite{doi:10.1080/01690969108406936} and
RG-65 \cite{Rubenstein:1965:CCS:365628.365657} consist of 30
and 65 noun pairs respectively.


For each dataset, we calculate the cosine similarity between the
vectors of word pairs and measure Spearman's rank correlation
coefficient between the scores produced by the embeddings and human
ratings.  We report the average of the correlations on all 11
datasets.  Each word similarity task in the above list represents a
different aspect of word similarity, and as such, averaging the
results points to the quality of the word embeddings on several
tasks. We later analyze specific datasets.

\paragraph{Geographic Analogies} 

\newcite{mikolov-13b} created a test set of analogous word pairs such
as $a$:$b$ $c$:$d$ raising the analogy question of the form ``$a$ is
to $b$ as $c$ is to \_\_'' where $d$ is unknown. We report results on a subset
of this dataset which focuses on finding capitals of common countries,
e.g., {\it Greece} is to {\it Athens} as {\it Iraq} is to \_\_. This
dataset consists of 506 word pairs. For given word pairs, $a$:$b$
$c$:$d$ where $d$ is unknown, we use the vector offset method
\cite{mikolov-13a}, i.e., we compute a vector $v = v_b - v_a + v_c$
where $v_a$, $v_b$ and $v_c$ are vector representations of the words
$a$, $b$ and $c$ respectively; we then return the word $d$ with the
greatest cosine similarity to $v$.

\paragraph{NP Bracketing} Here the goal is to identify the correct
bracketing of a three-word noun \cite{lazaridou-13}. For example, the
bracketing of {\em annual (price growth)} is ``right,'' while the
bracketing of {\em (entry level) machine} is ``left.'' Similarly to
\newcite{faruqui:2015}, we concatenate the word vectors of the three
words, and use this vector for binary classification into left or
right.

Since most of the datasets that we evaluate on in this paper are not
standardly separated into development and test sets, we report all
results we calculated (with respect to hyperparameter differences) and
do not select just a subset of the results.

\subsection{Evaluation}

\paragraph{Preliminary Experiments} In our first set of experiments,
we vary the dimension of the word embedding vectors. We
try $m \in \{ 50, 100, 200, 300 \}$.  Our experiments
showed that the results consistently improve when the dimension
increases for all the different datasets. For example, for $m=50$ and
{\sc Wiki1}, we get an average of $46.4$ on the word similarity
tasks, $50.1$ for $m=100$, $53.4$ for $m=200$ and $54.2$ for
$m=300$.  The more data are available, the more likely larger dimension
will improve the quality of the word embeddings. Indeed, for {\sc Wiki5},
we get an average of $49.4$, $54.9$, $57.0$ and $59.5$ for
each of the dimensions.  The improvements with respect to
the dimension are consistent across all of our results, so we fix $m$
at $300$. 

We also noticed a consistent improvement in accuracy when
using more data from Wikipedia. For example, for $m=300$, using {\sc
  Wiki1} gives an average of $54.1$, while using {\sc Wiki2} gives an
average of $54.9$ and finally, using {\sc Wiki5} gives an average of
$59.5$. We fix the dataset we use to be {\sc Wiki5}.

\paragraph{Results} Table~\ref{table:results} describes the results
from our first set of experiments. (Note that the table is divided
into 9 distinct blocks, labeled A through I.)  In general, adding prior knowledge
to eigenword embeddings does improve the quality of word vectors for
the word similarity, geographic analogies and NP bracketing tasks on
several occasions (blocks D--F compared to last row in blocks A--C).
For example, our eigenword vectors encoded with
prior knowledge (CCAPrior) consistently perform better than the
eigenword vectors that do not have any prior knowledge for the word
similarity task ($59.5$, Eigen in the first row under NPK
column, versus block D). The only exceptions are for $\alpha=0.1$ with WordNet ($59.1$),
for $\alpha=0.7$ with PPDB ($59.3$) and for $\alpha=0.9$ with FrameNet
($58.9$), where $\alpha$ denotes the smoothing factor.


In several cases, running the retrofitting algorithm of
\newcite{faruqui:2015:Retro} on top of our word embeddings helps
further, as if ``adding prior knowledge twice is better than once.''
Results for these word embeddings (CCAPrior+RF) are shown in Table
\ref{table:results}. Adding retrofitting to our encoding of prior
knowledge often performs better for word similarity and NP bracketing
tasks (block D versus G and block F versus I). Interestingly,
CCAPrior+RF embeddings also often perform better than eigenword
vectors (Eigen) of \newcite{dhillon-15} when retrofitted using the
method of \newcite{faruqui:2015:Retro}. For example, in the word
similarity task, eigenwords retrofitted with WordNet get an accuracy
of $62.2$ whereas encoding prior knowledge using both CCA and
retrofitting gets a maximum accuracy of $63.3$. We see the same
pattern for PPDB, with $63.6$ for ``Eigen'' and $64.9$ for
``CCAPrior+RF''. We hypothesize that the reason for these changes is
that the two methods for encoding prior knowledge maximize different
objective functions.

The performance with FrameNet is weaker, in some cases leading to
worse performance (e.g., with Glove and SG vectors). We believe that
FrameNet does not perform as well as the other lexicons because it
groups words based on very abstract concepts; often words with
seemingly distantly related meanings (e.g., push and growth) can evoke
the same frame. This also supports the findings of
\newcite{faruqui:2015:Retro}, who noticed that the use of FrameNet as
a prior knowledge resource for improving the quality of word
embeddings is not as helpful as other resources such as WordNet and
PPDB.

We note that CCA works especially well for the geographic analogies
dataset. The quality of eigenword embeddings (and the other
embeddings) degrades when we encode prior knowledge using the method
of \newcite{faruqui:2015:Retro}. Our method improves the quality of
eigenword embeddings.




\paragraph{Global Picture of the Results}
When comparing retrofitting to CCA with prior knowledge, there is a noticable
difference. Retrofitting performs well or badly, depending on the dataset, while the
results with CCA are more stable. We attribute this to the difference
between how our algorithm and retrofitting work. Retrofitting makes a {\em direct}
use of the source of prior knowledge, by adding a regularization term that
enforces words which are similar according to the prior knowledge to be
closer in the embedding space. Our algorithm, on the other hand, makes a more
indirect use of the source of prior knowledge, by changing the co-occurence
matrix on which we do singular value decomposition.

Specifically, we believe that our algorithm is more stable to cases
in which words for the task at hand are unknown words with respect to the
source of prior knowledge.
This is demonstrated with the
geographical analogies task: in that case, retrofitting lowers the results in most cases.
The city and country names do not appear in the sources
of prior knowledge we used.

\paragraph{Further Analysis} We further inspected the results on the
word similarity tasks for the RG-65 and WS-353-ALL datasets. Our goal
was to find cases in which either CCA embeddings by themselves
outperform other types of embeddings or that encoding prior knowledge
into CCA the way we describe significantly improves the results.

For the WS-353-ALL dataset, the eigenword embeddings get
a correlation of 69.6. The next best performing word embeddings are the
multilingual word embeddings (68.0) and skip-gram
(58.3). Interestingly enough, the multilingual word embeddings also
use CCA to project words into a low-dimensional space using a linear
transformation, suggesting that linear projections are a good fit for
the WS-353-ALL dataset. The dataset itself includes pairs of common
words with a corresponding similarity score. The words that appear in
the dataset are actually expected to occur in similar contexts, a
property that CCA directly encodes when deriving word embeddings.

The best performance on the RG-65 dataset is with the Glove word
embeddings (76.6). CCA embeddings give an accuracy of 69.7 on that
dataset. However, with this dataset, we observe significant
improvement when encoding prior knowledge using our method. For
example, using WordNet with this dataset improves the results by 4.2
points (73.9).  Using the method of \newcite{faruqui:2015:Retro} (with
WordNet) on top of our CCA word embeddings improves the results even
further by 8.7 points (78.4).

\paragraph{The Role of Prior Knowledge} We also designed an experiment
to test whether using distributional information is necessary for
having well-performing word embeddings, or whether it is sufficient to
rely on the prior knowledge resource. In order to test this, we
created a sparse matrix that corresponds to the graph based on the
external resource graph. We then follow up with singular value
decomposition on that graph, and get embeddings of size
300. Table~\ref{table:svd} gives the results when using these
embeddings. We see that the results are consistently lower than the
results that appear in Table \ref{table:results}, implying that the
use of prior knowledge comes hand in hand with the use of
distributional information.  When using the retrofitting method by
Faruqui et al. on top of these word embeddings, the results barely
improved.

\begin{table}
\begin{center}
{\small
\begin{tabular}{|l|cc|}
\hline
Resource & WordSim  & NP Bracketing  \\
\hline
WordNet & 35.9 & 73.6  \\
PPDB & 37.5  & 77.9  \\
FrameNet & 19.9 & 74.5  \\
\hline
\end{tabular}
}
\end{center}
\caption{Results on word similarity dataset (average over 11 datasets) and NP bracketing. The word embeddings are
  derived by using SVD on the similarity graph extracted from the prior knowledge
  source (WordNet, PPDB and FrameNet).}
\label{table:svd}
\end{table}









\section{Related Work}

Our ideas in this paper for encoding prior knowledge in eigenword
embeddings relate to three main threads in existing literature.

One of the threads focuses on modifying the objective of word vector
training algorithms. \newcite{yu-dredze-2014}, \newcite{Xu-14},
\newcite{fried-14} and \newcite{bian-14} augment the training
objective in neural language models of \newcite{mikolov-13} to
encourage semantically related word vectors to come closer to each
other. \newcite{wang-14} propose a method for jointly embedding
entities (from FreeBase, a large community-curated knowledge base) and
words (from Wikipedia) into the same continuous vector space. 
\newcite{chen2015semantic} propose a similar joint model to improve
the word embeddings, but rather than using structured knowledge
sources their model focuses on discovering stronger semantic
connections in specific contexts in a text corpus.

Another research thread relies on post-processing steps to encode prior
knowledge from semantic lexicons in off-the-shelf word embeddings. The
main intuition behind this trend is to update word vectors by running
belief propagation on a graph extracted from the relation information
in semantic lexicons. The retrofitting approach of \newcite{faruqui:2015:Retro}
uses such techniques to obtain higher quality semantic vectors using
WordNet, FrameNet, and the Paraphrase Database. They report on how
retrofitting helps improve the performance of various
off-the-shelf word vectors such as Glove, Skip-Gram, Global Context,
and Multilingual, on various word similarity
tasks. \newcite{rothe-15-acl} also describe
how standard word vectors can be extended to various data types in
semantic lexicons, e.g., synsets and lexemes in WordNet.

Most of the standard word vector training algorithms use co-occurrence
within window-based contexts to measure relatedness among
words. Several studies question the limitations of defining
relatedness in this way and investigate if the word co-occurrence
matrix can be constructed to encode prior knowledge directly to
improve the quality of word vectors. \newcite{wang-hirst-mohamed-2015}
investigate the notion of relatedness in embedding models by
incorporating syntactic and lexicographic knowledge. In spectral
learning, \newcite{yih-12-lsa} augment the word co-occurrence matrix on
which LSA operates with relational information such that synonyms
will tend to have positive cosine similarity, and antonyms will tend
to have negative similarities.  Their vector space representation
successfully projects synonyms and antonyms on opposite sides in the
projected space. \newcite{chang-13-lsa} further generalize this
approach to encode multiple relations (and not just opposing
relations, such as synonyms and antonyms) using multi-relational LSA.

\ignore{
Our approach of encoding prior knowledge (described in
\S\ref{section:cca-prior-knowledge}) falls under the first thread.
In that we seek to
incorporate prior knowledge in the derivation of CCA based eigenword
embeddings \cite{dhillon-15}. The main motivation behind our approach is
to improve the optimization of correlation between the two views
by weighing them using the external
source of prior knowledge. Like \newcite{faruqui:2015:Retro} and
\newcite{rothe-15-acl}, the prior knowledge is based on lexical
resources such as WordNet, FrameNet and the Paraphrase Database.  We
evaluate our word embeddings on a myriad of datasets and compare them
against standard word vectors such as Glove, Skip-Gram, Global
Context, Multilingual and Eigen and retrofitted-standard word vectors
using the refining techniques of \newcite{faruqui:2015:Retro}.
}

In spectral learning, most of the studies on incorporating prior
knowledge in word vectors focus on LSA based word embeddings
\cite{yih-12-lsa,chang-13-lsa,turney-05,turney-06,turney-10}.

From the technical perspective, our work is also related to that
of \newcite{jagarlamudi-11}, who showed how to generalize CCA so
that it uses locality preserving projections \cite{niyogi2004locality}.
They also assume the existence of a weight matrix in a multi-view
setting that describes the distances between pairs of points in the two
views.

More generally, CCA is an important component for spectral learning
algorithms in the unsupervised setting and with latent variables
\cite{cohen-14c,narayan-16b,stratos-16}. Our method for incorporating prior knowledge into CCA could
potentially be transferred to these algorithms.

\section{Conclusion}

We described a method for incorporating prior knowledge into CCA.
Our method requires a relatively simple change
to the original canonical correlation analysis, where extra counts are
added to the matrix on which singular value decomposition is
performed.  We used our method to derive word embeddings in the style
of eigenwords, and tested them on a set of datasets. Our results
demonstrate several advantages of encoding prior knowledge into
eigenword embeddings.

\section*{Acknowledgements}

The authors would like to thank Paramveer Dhillon for his help with running the \texttt{SWELL}
package. The authors would also like to thank Manaal Faruqui and
Sujay Kumar Jauhar for their help and technical assistance with the
retrofitting package and the word embedding evaluation suite.
Thanks also to Ankur Parikh for early discusions on this project.
This work was completed while the first author
was an intern at the University of Edinburgh, as part of the Equate Scotland program.
This research was supported by an EPSRC grant (EP/L02411X/1) and an EU H2020
grant (688139/H2020-ICT-2015; SUMMA).

\section*{Appendix A: Proofs}

{\small

\begin{proof}[Proof of Lemma 1]
  The proof is similar to the one that appears in
  \newcite{koren2003visualization} for Lemma 3.1. The only difference
  is the use of two views.  Note that $[X^{\top} L Y]_{ij} =
  \sum_{k,k'} X_{ki}L_{kk'}Y_{k'j}$.  As such,
  \begin{align}
    \MoveEqLeft {[X^{\top} L Y]_{ij} = \sum_{k,k'} (n \delta_{kk'} - 1) X_{ki}Y_{k'j} }\\
    & = \sum_{k=1}^n nX_{ki}Y_{kj} - \underbrace{\left( \sum_{k=1}^n X_{ki} \right)}_{0} \times \underbrace{\left( \sum_{k'=1}^n Y_{k'j} \right)}_{0} \\
    & = n[X^{\top}Y]_{ij},
  \end{align}
  \noindent where $\delta_{kk'} = 1$ iff $k= k'$ and $0$ otherwise, and
  the second equality relies on the assumption of the data being
  centered.
\end{proof}

\begin{proof}[Proof of Lemma 2]
Without loss of generality, assume $d \le d'$. Let $u'_1,\ldots,u'_d$ be the left singular vectors of $A$ and $v'_1,\ldots,v'_{d'}$ be the
right ones, and $\sigma_1,\ldots,\sigma_d$ be the singular values. Therefore $A = \sum_{j=1}^d \sigma_j u'_j (v'_j)^{\top}$. In addition, the objective
equals (after substituting $A$):

  \begin{align}
    \MoveEqLeft \sum_{i=1}^m \sum_{j=1}^d \sigma_j \langle u_i, u'_j \rangle \langle v_i, v'_j \rangle = \sum_{j=1}^d \sigma_j \left( \sum_{i=1}^m  \langle u_i, u'_j \rangle \langle v_i, v'_j \rangle \right) \label{eq:CC}
  \end{align}

Note that by the Cauchy-Schwartz inequality: 

\begin{align}
\MoveEqLeft \sum_{j=1}^d \sum_{i=1}^m  \langle u_i, u'_j \rangle \langle v_i, v'_j \rangle = \sum_{i=1}^m \sum_{j=1}^d  \langle u_i, u'_j \rangle \langle v_i, v'_j \rangle \\
& \le \sum_{i=1}^m \sqrt{\sum_{j=1}^d |\langle u_i, u'_j \rangle|^2} \sqrt{\sum_{j=1}^d |\langle v_i, v'_j \rangle|^2} \le m
\end{align}

In addition, note that if we choose $u_i = u'_i$ and $v_i = v'_i$, then the inequality above becomes an equality, and in addition,
the objective in Eq.~\refeq{eq:CC} will equal the sum of the $m$ largest singular vectors $\sum_{j=1}^m \sigma_j$. As such, this assignment
to $u_i$ and $v_i$ maximizes the objective.
\end{proof}

\ignore{
\begin{proof}[Proof of Lemma 2]
  Note first that if $u_i$ and $v_i$ are actually the left/right
  singular vectors, then the objective in Eq.~\refeq{eq-obj} is
  positive. Therefore, maximizing the objective corresponds to
  maximizing its square. The square of the objective, under the
  constraints, can be shown to be equal:  

  \begin{align}
    \MoveEqLeft \sum_{i=1}^m \sum_{j=1}^m \left( u_i^{\top} A v_i \right) \left( v_j^{\top} A^{\top} u_j \right) \\
    & = \sum_{i=1}^m \sum_{j=1}^m \mathrm{tr}\left( u_i^{\top} A v_i v_j^{\top} A^{\top} u_j \right) \\
    & = \sum_{i=1}^m \sum_{j=1}^m \mathrm{tr}\left( u_j u_i^{\top} A v_i v_j^{\top} A^{\top} \right) \\
    & = \sum_{i=1}^m \mathrm{tr}\left( A v_i v_i^{\top} A^{\top} \right) = \sum_{i=1}^m \mathrm{tr}\left( v_i^{\top} A^{\top} A v_i  \right) \\
    & = \sum_{i=1}^m v_i^{\top} (A^{\top} A) v_i.
  \end{align}

  \noindent where we used the orthogonality of $u_i$ and $u_j$ for $i
  \neq j$ and the fact that $\mathrm{tr}(AB) = \mathrm{tr}(BA)$
  whenever both $AB$ and $BA$ are defined, and finally, the fact that
  the trace of a scalar is just the scalar itself.
  
  According to Lemma 2.2 in \newcite{koren2003visualization}, the
  above objective gives the eigenvectors of $A^{\top} A$. The
  relationship between SVD and eigenvalue decomposition states that
  the eigenvectors of $A^{\top}A$ are the right singular vectors of
  $A$, showing that indeed we find the right singular vectors through
  $v_1,\ldots,v_m$. To show that $u_1,\ldots,u_m$ corresponds to the
  left singular vectors, we repeat the above set of equalities
  symmetrically, flipping the role of $u_i$ and $v_i$. (The
  relationship between SVD and eigenvalue decomposition states that
  the eigenvectors of $AA^{\top}$ are the left singular vectors of
  $A$.)
\end{proof}
}

\begin{proof}[Proof of Lemma 3]
  First, by definition of matrix multiplication,
  \begin{align}
    \sum_{k=1}^m (X u_k)^{\top} L \left( Y v_k \right) = \sum_{i,j} L_{ij} \left( \sum_{k=1}^m [Xu_k]_i [Yv_k]_j \right). \label{eq:AA}
  \end{align}
  Also,
  \begin{align}
    \left(d^m_{ij}\right)^2 = \frac{1}{2} \left( \sum_{k=1}^m [Xu_k]_i^2 - 2[Xu_k]_i[Yv_k]_j + [Yv_k]^2_j \right).
  \end{align}
  Therefore, 
  \begin{align}
    \MoveEqLeft{2\sum_{i, j} -L_{ij}\left( d^m_{ij} \right)^2} \\
    & = \sum_{i, j} -L_{ij} \left( \sum_{k=1}^m  - 2[Xu_k]_i[Yv_k]_j \right) \\
    & \,\,\,\,\,\,+ \underbrace{\sum_{i, j} -L_{ij} \left( \sum_{k=1}^m [Xu_k]_i^2 + [Yv_k]^2_j \right)}_{0} \\
    & = 2 \sum_{i,j} L_{ij} \left( \sum_{k=1}^m [Xu_k]_i[Yv_k]_j, \right) \label{eq:BB}
  \end{align}
  \noindent where the first two terms disappear because of the
  definition of the Laplacian. The comparison of Eq.~\refeq{eq:AA} to
  Eq.~\refeq{eq:BB} gives us the necessary result.
\end{proof}
}



\bibliographystyle{acl2012}
\bibliography{nlp}

\end{document}